\def\eqref#1{equation~\ref{#1}}
\def\1{\bm{1}}
\DeclareMathAlphabet{\mathsfit}{\encodingdefault}{\sfdefault}{m}{sl}
\SetMathAlphabet{\mathsfit}{bold}{\encodingdefault}{\sfdefault}{bx}{n}
\theoremstyle{definition}
\theoremstyle{remark}
\newcommand{\method}{SubGEC}
\newcommand{\ie}{\textit{i}.\textit{e}., }
\newcommand{\eg}{\textit{e}.\textit{g}., }
\DeclareMathOperator{\diag}{diag}
\begin{document}

\title{Subgraph Gaussian Embedding Contrast for Self-Supervised Graph Representation Learning}

\titlerunning{Subgraph Gaussian Embedding Contrast for Self-Supervised GRL}


\author{Shifeng Xie \orcidID{0009-0005-2909-2978}\and Aref Einizade \orcidID{0000-0002-8546-7261} \and \\  Jhony H. Giraldo (\Letter) \orcidID{0000-0002-0039-1270}}


\authorrunning{S. Xie et al.}

\institute{LTCI, Télécom Paris, Institut Polytechnique de Paris, Palaiseau, France \email{\{shifeng.xie, aref.einizade, jhony.giraldo\}@telecom-paris.fr}
}




\maketitle              

\begin{abstract}
Graph Representation Learning (GRL) is a fundamental task in machine learning, aiming to encode high-dimensional graph-structured data into low-dimensional vectors.
Self-Supervised Learning (SSL) methods are widely used in GRL because they can avoid expensive human annotation.
In this work, we propose a novel Subgraph Gaussian Embedding Contrast (\method) method.
Our approach introduces a subgraph Gaussian embedding module, which adaptively maps subgraphs to a structured Gaussian space, ensuring the preservation of input subgraph characteristics while generating subgraphs with a controlled distribution.
We then employ optimal transport distances, more precisely the Wasserstein and Gromov-Wasserstein distances, to effectively measure the similarity between subgraphs, enhancing the robustness of the contrastive learning process.
Extensive experiments across multiple benchmarks demonstrate that \method~outperforms or presents competitive performance against state-of-the-art approaches.
Our findings provide insights into the design of SSL methods for GRL, emphasizing the importance of the distribution of the generated contrastive pairs.

\keywords{Subgraph Gaussian embeddings \and graph representation learning  \and self-supervised learning \and optimal transport}
\end{abstract}

\section{Introduction}

Graph Representation Learning (GRL) is a fundamental task in machine learning and data mining, aiming to encode high-dimensional, sparse graph-structured data into low-dimensional dense vectors \cite{ju2024comprehensive}.
Effective GRL techniques enable downstream applications such as node classification, link prediction, and community detection.
Self-Supervised Learning (SSL) has emerged as a promising approach for GRL by reducing the dependence on extensive human annotation \cite{jaiswal2020survey}.
Among SSL methods, contrastive learning has gained significant attention due to its ability to learn meaningful representations by distinguishing similarities and differences among data samples.
In contrastive learning, positive sample pairs typically consist of two augmented views of the same data point, which should be mapped close in the representation space, whereas negative sample pairs are formed by comparing different data points \cite{chen2020simple}.

\begin{figure}[t]
    \centering
    \includegraphics[width=0.95\textwidth]{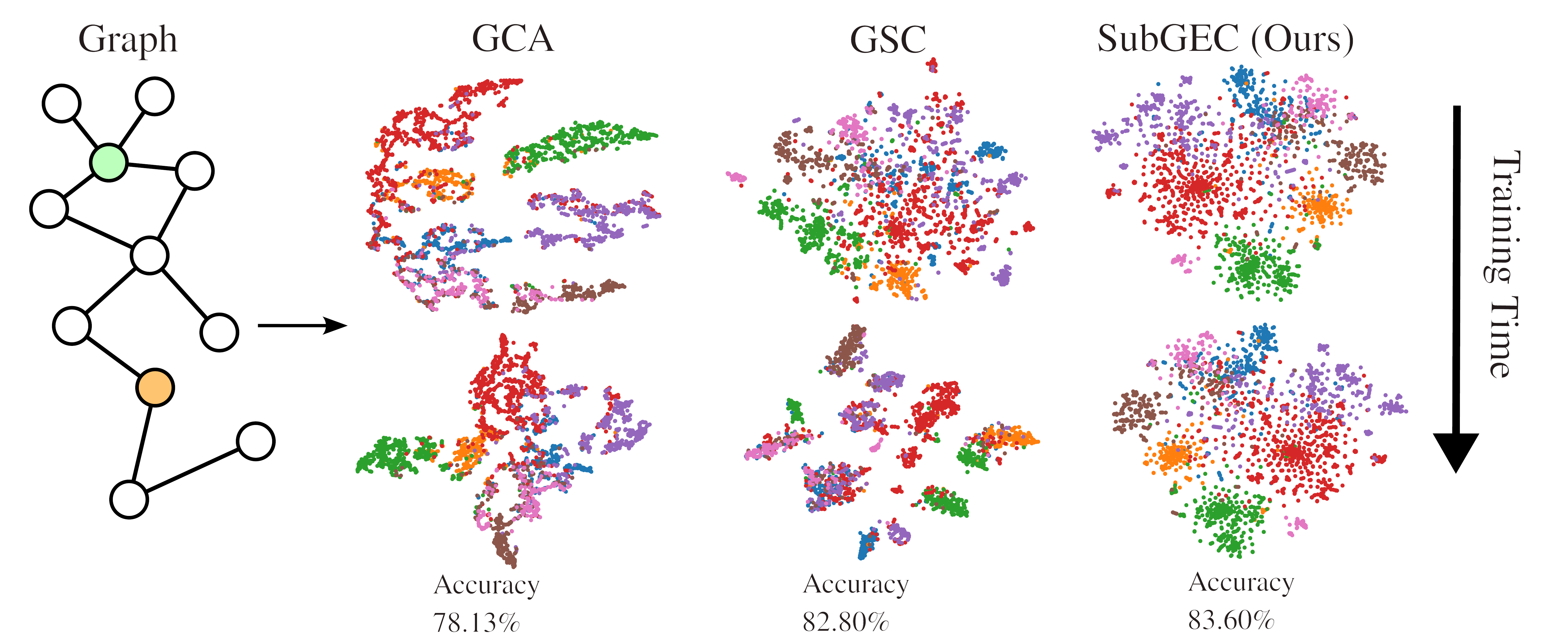} 
    \caption{t-stochastic neighbor embedding (t-SNE) visualizations of previous graph representation learning methods based on contrastive learning: GCA \cite{GCA}, GSC \cite{GSC}, and our method \method.
    Each point corresponds to a node representation with reduced dimensionality, with colors indicating classes.
    Unlike GCA and GSC, which exhibit sharp boundaries, \method~maps node representations into a dense, uniform, and linearly separable space.}
    \label{fig:graph_representation_comparison}
\end{figure}

Existing graph-based contrastive learning methods primarily generate positive and negative pairs through structural perturbations \cite{GRACE,thakoorlarge,GCA} or learnable transformations \cite{zhuounified,GSC}. 
However, Figure \ref{fig:graph_representation_comparison} shows t-stochastic neighbor embedding (t-SNE) visualizations of previous SSL methods, such as GCA \cite{GCA} and GSC \cite{GSC}, where we observe uneven node distributions, sharp boundaries, and erroneous clusters. 
These issues suggest that existing approaches struggle to maintain smooth and meaningful representations, negatively impacting their performance in GRL tasks.

In this paper, we propose the \textbf{Sub}graph \textbf{G}aussian \textbf{E}mbedding \textbf{C}ontrast (\method) model, a novel framework for graph contrastive learning.
Our method introduces the Subgraph Gaussian Embedding (SGE) module, which maps input subgraphs to a structured Gaussian space, ensuring that the output features follow a Gaussian distribution using Kullback--Leibler (KL) divergence. 
This learnable mapping effectively controls the distribution of embeddings, improving representation quality.
The generated subgraphs are then paired with the original subgraphs to form positive and negative contrastive pairs, and similarity is measured using Optimal Transport (OT) distances.
By leveraging the Wasserstein and Gromov-Wasserstein distances, our approach enhances robustness and mitigates mode collapse (also called positive collapse \cite{jing2022understandingdimensionalcollapsecontrastive}), where the embeddings shrink into a low-dimensional subspace, by controlling the embedding distribution.

Gaussian distributions provide several properties that make them useful in SSL for graphs.
For example, they preserve important mathematical structures, such as displacement interpolation, which helps in clustering and interpolation tasks \cite{zhu2020optimal,goldfeld2020gaussian}.
Gaussian smoothing also improves robustness by reducing noise and stabilizing learned representations \cite{goldfeld2020gaussian}.
Additionally, their simple parameterization using means and covariances makes them computationally efficient, enabling scalability to high-dimensional spaces while avoiding excessive computational costs \cite{zhu2020optimal,chen2018optimal}.
These properties make Gaussian-based OT a valuable tool in fields such as machine learning, physics, and statistical inference \cite{genevay2016stochastic}.
In this work, we theoretically and empirically prove the benefits of using Gaussian embeddings in contrastive learning.


The primary contributions of this paper are as follows:
\begin{itemize}
    \item We introduce \method, a novel framework that outperforms or remains competitive with state-of-the-art methods across eight benchmark datasets.
    \item We theoretically and empirically highlight the importance of mapping the distribution of contrastive pairs into a Gaussian space and analyze its impact on GRL. 
    \item We conduct extensive ablation and validation studies to demonstrate the effectiveness of each component of \method.
\end{itemize}

\section{Related Work}

GRL has gained significant attention due to its ability to encode structured data into meaningful representations.
Here, we review the recent advancements in Graph Neural Networks (GNNs), SSL on graphs, and contrastive learning techniques. 

\textbf{GNNs} \cite{wu2020comprehensive} have been widely adopted for learning representations that capture both node features and graph topology \cite{ju2024comprehensive}.
Several architectures have been proposed to improve their learning capabilities. 
For example, Graph Convolutional Networks (GCNs) \cite{kipf2016semi} leverage a simplification of graph filters to aggregate information from neighboring nodes.
GraphSAGE \cite{hamilton2017inductive} introduced an inductive learning framework with multiple aggregation functions, enabling generalization to unseen nodes.
Furthermore, Graph Attention Networks (GAT) \cite{velivckovic2017graph} integrate attention mechanisms to dynamically weigh node relationships, improving feature propagation.
However, these models require supervised training.


\textbf{SSL on graphs} aims to design and solve learning tasks that do not require labeled data, avoiding costly supervised learning methodologies. 
Based on how these tasks are defined, SSL methods can be categorized into two main types: \textit{predictive} and \textit{contrastive} approaches.

Predictive methods focus on learning useful representations by generating perturbed versions of the input graph. For example, BGRL \cite{thakoorlarge} learns node representations by encoding two perturbed versions of a graph using an online encoder and a target encoder.
The online encoder is optimized to predict the target encoder's representation, while the target encoder is updated as an exponential moving average of the online encoder. 
BNLL \cite{liu2024bootstrap} improves upon BGRL by introducing additional positive node pairs based on a homophily assumption, where neighboring nodes tend to share the same label.
This is achieved by incorporating cosine similarity between a node's online representation and the weighted target representations of its neighbors.
VGAE \cite{vgae} adopts a variational autoencoder framework to reconstruct the input graph and its features. 

\textbf{Contrastive methods}, which are the focus of this paper, generally outperform predictive methods in SSL for graphs.
These methods can be classified based on how data pairs are defined: node-to-node, graph-to-graph, and node-to-graph comparisons.
For example, GRACE \cite{GRACE} generates two perturbed graph views and applies contrastive learning at the node level.
MUSE \cite{MUSE} refines this approach by extracting multiple embeddings—semantic, contextual, and fused—to enhance node-to-node contrastive learning. 
However, node-level contrastive learning is often suboptimal as it struggles to capture the overall structural information of the graph.

At the subgraph level, DGI \cite{DGI} employs node-to-graph contrast, where it extracts node embeddings from the original and perturbed graphs and adjusts their agreement levels using a readout function. 
Spectral polynomial filter methods like GPR-GNN \cite{chien2021adaptive} and ChebNetII \cite{he2024convolutionalneuralnetworksgraphs} offer greater flexibility than GCNs by adapting to different homophily levels.
However, they often underperform when used as encoders for traditional SSL methods.
To address this, PolyGCL \cite{polygcl} constrains polynomial filter expressiveness to construct high-pass and low-pass graph views while using a simple linear combination strategy for optimization. Unlike DGI, PolyGCL applies this contrastive approach to both high- and low-frequency embeddings extracted with shared-weight polynomial filters.

Subg-Con \cite{Jiao2020SubgraphCF} extends DGI by performing contrastive learning at the subgraph level. 
It selects anchor nodes and extracts subgraphs using the personalized PageRank algorithm, adjusting the agreement between anchor nodes and their corresponding subgraphs for positive and negative pairs. 
However, methods like DGI and Subg-Con rely on a readout embedding to represent entire graphs, which disregards structural information. 
GSC \cite{GSC} addresses this limitation by applying subgraph-level contrast using Wasserstein and Gromov-Wasserstein distances from OT to measure subgraph similarity, ensuring a more structurally-aware contrastive learning process.  From another point of view, FOSSIL \cite{sangare2025a} uses the fused Gromow-Wasserstein distance \cite{titouan2019optimal,brogat2022learning} in the loss function to benefit from both node and subgraph-level features.

\method~leverages OT distance metrics to effectively measure subgraph dissimilarity as in \cite{GSC}.
However, unlike previous OT-based models such as GSC \cite{GSC}, our approach introduces a novel mapping of subgraphs into a structured Gaussian space.
This design choice is driven by the properties of Gaussian embeddings, which enhance representation quality.
Our work provides both theoretical justification and empirical validation for the effectiveness of this approach.



\section{Preliminaries}
\subsection{Mathematical Notation}
Consider an undirected graph \( G = (\mathcal{V}, \mathcal{E}) \) with vertex set \( \mathcal{V} \) and edge set \( \mathcal{E} \). The feature matrix \( \mathbf{X} = [\mathbf{x}_1, \dots, \mathbf{x}_N]^\top \in \mathbb{R}^{N \times C} \) contains node features \( \mathbf{x}_i \in \mathbb{R}^C \), where \( N \) is the number of nodes and \( C \) is the feature dimension.
The adjacency matrix \( \mathbf{A} \in \mathbb{R}^{N \times N} \) represents the graph topology, and \( \mathbf{D} \) is the diagonal degree matrix.
For the \( i \)-th node, let \( G^i = (\mathcal{V}^i, \mathcal{E}^i) \) be its induced Breadth-First Search \cite{Bundy1984} (BFS) subgraph with \( k^i \) nodes with adjacency matrix \( \mathbf{A}^i \in \mathbb{R}^{k^i \times k^i} \) and feature matrix \( \mathbf{X}^i \in \mathbb{R}^{k^i \times C} \).
Our method embeds this subgraph (with the same sets of nodes and edges) producing adjacency matrix \( \tilde{\mathbf{A}}^i \) and feature matrix \( \tilde{\mathbf{X}}^i \in\mathbb{R}^{k^i \times F}\).
In this work, we preserve the subgraph topology, so that \( \tilde{\mathbf{A}}^i =  \mathbf{A}^i\).

The KL divergence \cite{van2014renyi} is an asymmetry measure between two probability distributions \( P \) and \( Q \). It quantifies the informational loss that occurs when distribution \(Q \) is utilized to approximate distribution \(P \). The KL divergence is defined as:
\begin{equation}
    \label{eqn:KL_divergence}
    D_{KL}(P \| Q) = \sum_{x \in \mathcal{X}} P(x) \log \left( \frac{P(x)}{Q(x)} \right),
\end{equation}
where \( P(x) \) and \( Q(x) \) are the probability masses of \( P \) and \( Q \) at each point \( x \) in the sample space \( \mathcal{X} \).

\subsection{Problem Formulation for Self-Supervised Graph Representation}
The goal of self-supervised graph representation learning is to learn graph embeddings \( \mathbf{R} \) through an encoder \( \varepsilon: \mathbb{R}^{N \times N} \times \mathbb{R}^{N \times C} \to \mathbb{R}^{N \times F} \), where \( \mathbf{R} = \varepsilon(\mathbf{A}, \mathbf{X}; \boldsymbol{\theta}) \) is parametrized by some learnable parameters $\boldsymbol{\theta}$ and \( F \) represents the dimension of the embeddings (representation).
This procedure is unsupervised, \ie it does not use labels.
In this paper, $\varepsilon(\cdot)$ is a GNN \cite{ju2024comprehensive}, aiming to effectively capture both the graph's feature and topology information within the representation space.

\subsection{Optimal Transport Distance}
The Wasserstein distance \cite{wasserstein}, commonly used in OT, serves as a robust metric to compare the probability distributions defined over metric spaces.
For subgraphs \( G^i \) and \( G^j \), their corresponding feature matrices are denoted as \( \mathbf{X}^i \in \mathbb{R}^{k^i \times C} \) and \( \mathbf{X}^j \in \mathbb{R}^{k^j \times C} \).
\( \mathbf{x}_{m}^{i} \in \mathbb{R}^{C}\) and \( \mathbf{x}_{n}^{j} \in \mathbb{R}^{C}\) respectively denote the feature vector of the  \(m\)-th and \(n\)-th node in the subgraphs \(G^{i}\) and \(G^{j}\), where  \(m=1,2,\ldots,k^{i}\) and \(n=1,2,\ldots,k^{j}\).
The $r$-Wasserstein distance between the feature distributions of these subgraphs is defined as \cite{kolouri2017optimal,villani2021topics}:
\begin{equation}
W_r(\mathbf{X}^i, \mathbf{X}^j) := \left(\min_{\mathbf{T} \in \pi(u, v)} \sum_{m=1}^{k^i} \sum_{n=1}^{k^j} \mathbf{T}_{(m,n)} d(\mathbf{x}^i_m, \mathbf{x}^j_n)^r\right)^{\frac{1}{r}},
\end{equation}
where \( \pi(u, v) \) represents the set of all valid possible transport plans with probability distributions \( u \) and \( v \) responsible for generating $\mathbf{x}^i_m$ and $\mathbf{x}^j_n$, respectively. These distributions capture the node feature distributions in subgraphs \( G^i \) and \( G^j \).
The matrix \( \mathbf{T} \in \pi(u, v) \) is the OT plan that matches the node pairs of the two subgraphs. \( \mathbf{T}_{(m,n)} \) is value of the transportation plan between nodes \( m \) and \( n \), and \( d(\mathbf{x}^i_m, \mathbf{x}^j_n) \) represents a valid distance metric.

\begin{figure}[t]
\centering
\includegraphics[width=0.95\textwidth]{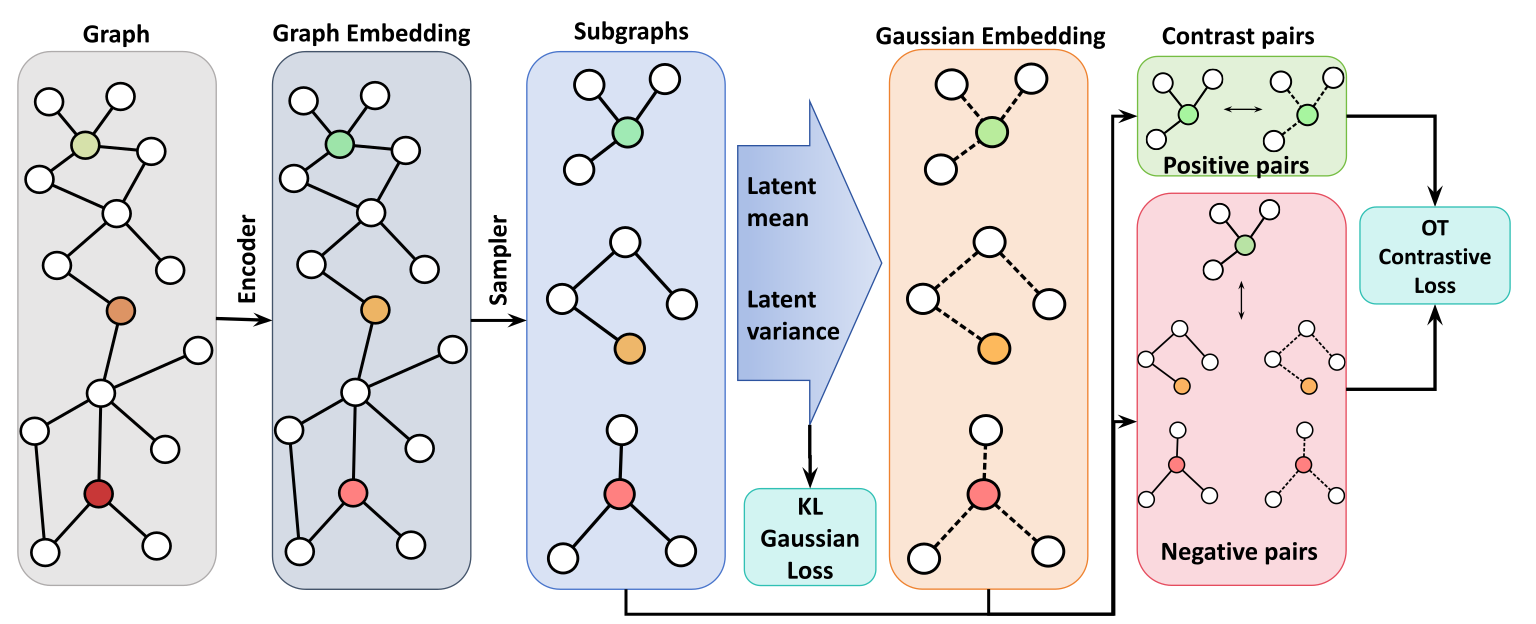}
\caption{Overview of the \method~method. Our model employs a graph encoder to obtain graph embeddings. We randomly select a set of nodes and then extract corresponding subgraphs using BFS sampling. Therefore, we use the proposed subgraph Gaussian embedding module using a KL loss to generate contrastive samples. Finally, we leverage OT distances for contrastive learning.}
\label{fig:method_overview}
\end{figure}

Similarly, the Gromov-Wasserstein distance \cite{arya2024gromovwassersteindistancespheres,vayerfused} extends this idea to compare graph-structured data, where internal distances between nodes are taken into account. For two subgraphs \( G^i \) and \( G^j \) with adjacency matrices \( \mathbf{A}^i\) and \( \mathbf{A}^j \), and feature matrices \( \mathbf{X}^i \) and \( \mathbf{X}^j \), the Gromov-Wasserstein distance is defined as \cite{vayerfused}:
\begin{equation}
\begin{split}
&GW_r(\mathbf{A}^i, \mathbf{A}^j, \mathbf{X}^i, \mathbf{X}^j)\\
&:= \left(\min_{\mathbf{T} \in \pi(u, v)} \sum_{m,\tilde{m},n,\tilde{n}} \mathbf{T}_{(m,n)} \mathbf{T}_{(\tilde{m},\tilde{n})} \left| d(\mathbf{x}^i_m, \mathbf{x}^i_{\tilde{m}})^r - d(\mathbf{x}^j_n, \mathbf{x}^j_{\tilde{n}})^r \right|\right)^{\frac{1}{r}},
\end{split}
\end{equation}
where \( d(\mathbf{x}^i_m, \mathbf{x}^i_{\tilde{m}}) \) and \( d(\mathbf{x}^j_n, \mathbf{x}^j_{\tilde{n}}) \) represent valid distance metrics between node pairs \( (m, \tilde{m}) \) in subgraph \( G^i \), and \( (n, \tilde{n}) \) in subgraph \( G^j \), respectively. Note that the node neighborhoods are considered by the term $\mathbf{T}$, thus relying on the graph topology.  

In this work, for both the Wasserstein and Gromov-Wasserstein distances, we set $r=1$ and define \( d(\mathbf{x}^i_m, \mathbf{x}^j_n)= \exp\left( -\frac{\langle \mathbf{x}^i_m, \mathbf{x}^j_n \rangle}{\tau} \right) \), where \( \langle \cdot, \cdot \rangle \) denotes the cosine similarity between node features, and \( \tau \) is a temperature parameter.

\section{Subgraph Gaussian Embedding Contrast (\method)}

Figure~\ref{fig:method_overview} shows an overview of our methodology, where our process begins with an encoder of the input graph.
Subsequently, we obtain subgraphs utilizing BFS sampling.
The embedded node representations within these subgraphs are thus embedded into a Gaussian latent space, enforced by the KL divergence regularization.
Finally, we use the Wasserstein and Gromov-Wasserstein distances to measure the dissimilarities in the subgraphs for contrastive learning.
Our methodology is described in more detail in the following sections.

\subsection{Graph Encoder}
We begin by employing a graph encoder to preprocess the graph data \cite{VAE,GSC}. The output feature matrix of the graph encoder is the desired graph representation. The graph encoder comprises some graph convolution layers. 
Further details on the implementation of these layers are provided in the Appendix \ref{Encoder}.

\subsection{Subgraph Gaussian Embedding (SGE)}

Constructing positive and negative pairs is crucial in graph contrastive learning \cite{ju2024towards}. 
The SGE module offers diversity to prevent mode collapse \cite{jing2022understandingdimensionalcollapsecontrastive}.
It also avoids generated subgraphs from becoming overly similar to the input subgraphs \cite{grill2020bootstraplatentnewapproach}. 
The SGE module comprises a GraphSAGE \cite{liu2020graphsage,hamilton2017inductive} network and then two GAT \cite{velivckovic2017graph} models, representing the mean and variance for the KL loss.
The first step in SGE is as follows:
\begin{equation}
    \mathbf{H}_\text{GSA} = \text{GraphSAGE}\left( \mathbf{H}_{\text{conv}}, \mathbf{A}\right),
\end{equation}
where \(\mathbf{H}_{\text{conv}}\) represents the output of the graph encoder. 
Following GraphSAGE, GAT employs its attention mechanism to assign weights to the relationships between each node and its neighbors.
The hidden means and variances are managed by separate GAT networks and processed as follows:
\begin{equation}
\boldsymbol{\mu} = \text{GAT}_{\boldsymbol{\mu}}\left( \mathbf{H}_\text{GSA}, \mathbf{A}\right), \quad \log \boldsymbol{\sigma} = \text{GAT}_{\boldsymbol{\sigma}}\left( \mathbf{H}_\text{GSA}, \mathbf{A}\right).
\end{equation}
In this configuration, $\boldsymbol{\mu}$ and $\log \boldsymbol{\sigma}$ are matrices of mean and variance vectors $\boldsymbol{\mu}_i$ and $\boldsymbol{\sigma}_i$ for $i=1,\hdots,N$, respectively.
In our approach, the embedded matrix \( \tilde{\mathbf{X}} \) is generated using the reparametrization trick \cite{vgae} to facilitate the differentiation and optimization of our model as follows:
\begin{equation}
\tilde{\mathbf{X}} = \boldsymbol{\mu} + \boldsymbol{\sigma} \odot \boldsymbol{\epsilon},
\end{equation}
where $\odot$ states the element-wise multiplication, and the matrix $\boldsymbol{\epsilon} = [\boldsymbol{\epsilon}_1,\hdots,\boldsymbol{\epsilon}_N]^\top$, where $\boldsymbol{\epsilon}_i\sim \mathcal{N}(\mathbf{0}, \mathbf{I})$ for $i=1,\hdots,N$, represents Gaussian (normal) noise. 

\subsection{Kullback-Leibler Gaussian Regularization Loss}
In our approach, we introduce a regularization to the SGE module to guide the embedded subgraph node features toward a Gaussian distribution.
This regularization is implemented using the KL divergence. 
The prior \( p(\tilde{\mathbf{X}})=\prod_{i=1}^{N}{p(\tilde{\mathbf{x}}_i)} \) is taken as a product of independent normal distributions for each latent variable \( \tilde{\mathbf{x}}_i \), \ie the embedded feature vector of the $i$-th node. Similarly, by benefiting from Gaussianity on the posterior distribution $q(\tilde{\mathbf{x}}_i | \mathbf{X}, \mathbf{A})=\mathcal{N}(\boldsymbol{\mu}_i, \diag(\boldsymbol{\sigma}^2_i))$ \cite{kipf2016semi}, we express it on the whole data as:
\begin{equation}
q(\tilde{\mathbf{X}} | \mathbf{X}, \mathbf{A}) = \prod_{i=1}^{N}{q(\tilde{\mathbf{x}}_i | \mathbf{X}, \mathbf{A})}=\prod_{i=1}^{N}{\mathcal{N}(\boldsymbol{\mu}_i, \diag(\boldsymbol{\sigma}^2_i))},
\end{equation}
where $\diag(\mathbf{a})$ is a diagonal matrix with the elements of the vector $\mathbf{a}$ on its main diagonal, and $\boldsymbol{\sigma}^2_i$ obtains by element-wise power operation on the vector . The expression for the regularization then simplifies to (details in the Appendix \ref{KL}):
\begin{equation}
\label{eq:overall_loss}
\mathcal{L}_R=\beta~  \text{KL}\left(q(\tilde{\mathbf{X}} | \mathbf{X}, \mathbf{A}) \| p(\tilde{\mathbf{X}})\right).
\end{equation}
Here, \( \beta \) is a hyperparameter modulating the influence of the regularization term relative to the contrastive loss, which we introduce in Section \ref{contrastive}, enabling precise control over the balance between data fidelity and distribution alignment.

\subsection{Optimal Transport Contrastive and Model Loss}
\label{contrastive}

In terms of the architectures available for the contrastive learning loss function, options include the Siamese network loss \cite{he2018twofold}, the triplet loss \cite{hermans2017defense}, and the noise contrastive estimation loss \cite{gutmann2010noise}. 
Given the presence of multiple sets of negative pairs in our model, we opt for the InfoNCE loss \cite{oord2019representationlearningcontrastivepredictive}. Our contrastive loss function integrates the Wasserstein and Gromov-Wasserstein distances into the InfoNCE loss formulation \cite{oord2019representationlearningcontrastivepredictive}, addressing the complexities of graph-based data. 
The Wasserstein distance captures feature distribution representation within subgraphs. Furthermore, the Gromov-Wasserstein distance captures structural discrepancies, providing a topology-aware similarity measure. We define \(W_{(\tau)}(\mathbf{X}^i, \tilde{\mathbf{X}}^i):=W(\mathbf{X}^i, \tilde{\mathbf{X}}^i)/\tau\), and \(GW_{(\tau)}(\mathbf{A}^i, \mathbf{X}^i, \mathbf{A}^i, \tilde{\mathbf{X}}^i):=GW(\mathbf{A}^i, \mathbf{X}^i, \mathbf{A}^i, \tilde{\mathbf{X}}^i)/\tau\), where $\tau$ is a temperature hyperparameter.
The Wasserstein (\(\mathcal{L}_{\text{W}}\)) and Gromov-Wasserstein (\(\mathcal{L}_{\text{GW}}\)) contrastive losses are given as follows:
\begin{equation}
\label{eq:InfoNCE_loss}
\begin{split}
&\mathcal{L}_{\text{W}} = - \sum_{i \in \mathcal{S}} \log \frac{e^{-W_{(\tau)}(\mathbf{X}^i, \tilde{\mathbf{X}}^i)}}{\sum_{j \in \mathcal{S}, j\neq i}^{N} \left(e^{-W_{(\tau)}(\mathbf{X}^i, \tilde{\mathbf{X}}^j)}+ e^{-W_{(\tau)}(\mathbf{X}^i, \mathbf{X}^j)}\right)},\\
&\mathcal{L}_{\text{GW}}= - \sum_{i \in \mathcal{S}} \log \frac{e^{-GW_{(\tau)}(\mathbf{A}^i,\mathbf{X}^i, \mathbf{A}^i, \tilde{\mathbf{X}}^i)}}{\sum_{j\in \mathcal{S}, j\neq i}^{N} \left(e^{-GW_{(\tau)}(\mathbf{A}^i,\mathbf{X}^i, \mathbf{A}^j, \tilde{\mathbf{X}}^j)}+ e^{-GW_{(\tau)}(\mathbf{A}^i,\mathbf{X}^i,\mathbf{A}^j,\mathbf{X}^j)}\right)},
\end{split}
\end{equation}
where $\mathcal{S}$ is the set of sampled nodes.
The model loss \( \mathcal{L} \) incorporates the contrastive and regularization components as follows:
\begin{equation}
\label{eq:overall_loss}
\mathcal{L}=  \alpha \mathcal{L}_{\text{W}} + (1 - \alpha) \mathcal{L}_{\text{GW}} + \mathcal{L}_{\text{R}},
\end{equation}
where \( \alpha \) is a hyperparameter that balances the emphasis on feature distribution and structural fidelity.

\subsection{Theoretical Analysis of the Loss Function}
\label{sec:theoretical_analysis}

The following theorem illustrates the effect of adding the term $\text{KL}(\cdot)$ to the overall loss function $\mathcal{L}$ in (\ref{eq:overall_loss}) with the input $x$ and latent variable $z$.
\begin{theorem}
\label{thm:Mu_KL}
By increasing the number of subgraphs (and consequently their associate node feature matrices), minimizing InfoNCE loss $\mathcal{L}_{W}(\cdot)$ in (\ref{eq:InfoNCE_loss}) and also the KL divergence in (\ref{eq:overall_loss}), the \method~model implicitly minimizes:
\begin{equation}
    \mathbb{E}_{\mathbf{X}\sim p(\mathbf{X}|\tilde{\mathbf{X}})}\left[\text{KL}\left(q_{\phi}(\tilde{\mathbf{X}}|\mathbf{X},\mathbf{A})||p(\tilde{\mathbf{X}}|\mathbf{X},\mathbf{A})\right)\right].
\end{equation}
\end{theorem}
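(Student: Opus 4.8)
The plan is to recognize the claimed quantity as a KL decomposition obtained by ``completing the square'' in the usual ELBO/information-theoretic identity, and then to argue that the two training signals (the InfoNCE term $\mathcal{L}_W$ and the explicit KL regularizer) together force both pieces of that decomposition to shrink. First I would write the chain-rule / Bayes identity for the pair $(\mathbf{X},\tilde{\mathbf{X}})$ under the joint induced by $q_\phi(\tilde{\mathbf{X}}\mid\mathbf{X},\mathbf{A})$ and the data distribution over $\mathbf{X}$: for the true posterior $p(\tilde{\mathbf{X}}\mid\mathbf{X},\mathbf{A})$ one has, pointwise in $\mathbf{X}$,
\begin{equation}
\text{KL}\!\left(q_\phi(\tilde{\mathbf{X}}\mid\mathbf{X},\mathbf{A})\,\|\,p(\tilde{\mathbf{X}})\right)
= \text{KL}\!\left(q_\phi(\tilde{\mathbf{X}}\mid\mathbf{X},\mathbf{A})\,\|\,p(\tilde{\mathbf{X}}\mid\mathbf{X},\mathbf{A})\right) + \log\frac{p(\mathbf{X})}{p(\mathbf{X}\mid\tilde{\mathbf{X}})}\text{-type terms},
\end{equation}
i.e. the standard relation $\mathbb{E}_{q_\phi}[\log q_\phi(\tilde{\mathbf{X}}\mid\mathbf{X},\mathbf{A}) - \log p(\tilde{\mathbf{X}})] = \text{KL}(q_\phi\|p(\tilde{\mathbf{X}}\mid\mathbf{X},\mathbf{A})) - \mathbb{E}_{q_\phi}[\log p(\mathbf{X}\mid\tilde{\mathbf{X}})] + \log p(\mathbf{X})$ up to the normalizing constant. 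Taking $\mathbb{E}_{\mathbf{X}\sim p(\mathbf{X}\mid\tilde{\mathbf{X}})}$ and rearranging isolates the target expectation as (explicit KL regularizer) plus (a reconstruction/consistency term) minus (entropy-type constants that do not depend on $\phi$).

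The second step is to connect the reconstruction/consistency term to $\mathcal{L}_W$. The key observation is that minimizing the Wasserstein InfoNCE loss in (\ref{eq:InfoNCE_loss}) drives $W_{(\tau)}(\mathbf{X}^i,\tilde{\mathbf{X}}^i)$ down relative to the negative pairs, which means the generated feature matrix $\tilde{\mathbf{X}}^i$ is pushed to be OT-close to its source $\mathbf{X}^i$; interpreting $\tilde{\mathbf{X}}^i$ as a sample from $q_\phi$ and $\mathbf{X}^i$ as a sample from $p(\mathbf{X}\mid\tilde{\mathbf{X}})$, small Wasserstein distance upper-bounds (via a Lipschitz/transport argument, using the chosen cost $d(\cdot,\cdot)=\exp(-\langle\cdot,\cdot\rangle/\tau)$) the negative log-likelihood $-\mathbb{E}_{q_\phi}[\log p(\mathbf{X}\mid\tilde{\mathbf{X}})]$ that appears as the reconstruction term. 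Here the ``increasing the number of subgraphs'' hypothesis enters: as $|\mathcal{S}|\to\infty$ the empirical InfoNCE objective concentrates to its population version, so minimizing it genuinely controls the expected reconstruction term rather than a finite-sample proxy, and the normalizer $p(\mathbf{X})$ (the log-partition-style constant) is the limit of the negative-pair denominator and is $\phi$-independent.

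Combining the two steps: the explicit KL term in $\mathcal{L}$ controls $\text{KL}(q_\phi\|p(\tilde{\mathbf{X}}))$, the InfoNCE term controls the reconstruction term, and their sum equals the target expectation $\mathbb{E}_{\mathbf{X}\sim p(\mathbf{X}\mid\tilde{\mathbf{X}})}[\text{KL}(q_\phi(\tilde{\mathbf{X}}\mid\mathbf{X},\mathbf{A})\|p(\tilde{\mathbf{X}}\mid\mathbf{X},\mathbf{A}))]$ up to additive constants independent of the learnable parameters; hence jointly minimizing both implicitly minimizes that expectation. I expect the main obstacle to be making the second step rigorous: one must pin down precisely what distribution $p(\mathbf{X}\mid\tilde{\mathbf{X}})$ is (presumably the Bayesian inverse of the $q_\phi$-channel against the data prior), and then justify the inequality ``small $W_1$ cost $\Rightarrow$ small cross-entropy term'' — this needs either a log-Lipschitz assumption on the implied likelihood or an argument that the InfoNCE loss and the transport cost share the same minimizer, so that the decomposition's reconstruction term and $\mathcal{L}_W$ are minimized simultaneously. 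The concentration-in-$|\mathcal{S}|$ claim is routine (a law-of-large-numbers / uniform convergence statement for the InfoNCE estimator), so I would state it as a lemma and not belabor it.
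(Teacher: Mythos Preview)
Your decomposition in step one is fine --- the identity
\[
\text{KL}\bigl(q_\phi(\tilde{\mathbf{X}}\mid\mathbf{X},\mathbf{A})\,\|\,p(\tilde{\mathbf{X}}\mid\mathbf{X},\mathbf{A})\bigr)
= \text{KL}\bigl(q_\phi(\tilde{\mathbf{X}}\mid\mathbf{X},\mathbf{A})\,\|\,p(\tilde{\mathbf{X}})\bigr)
- \mathbb{E}_{q_\phi}\bigl[\log p(\mathbf{X}\mid\tilde{\mathbf{X}})\bigr] + \log p(\mathbf{X})
\]
is the standard ELBO rearrangement. The gap is your second step. You try to make $\mathcal{L}_W$ control the reconstruction term $-\mathbb{E}_{q_\phi}[\log p(\mathbf{X}\mid\tilde{\mathbf{X}})]$ via a ``small Wasserstein $\Rightarrow$ small cross-entropy'' Lipschitz argument, and you correctly flag this as the main obstacle --- but it is not how the paper uses the InfoNCE term, and there is no clean inequality of that form available here (the cost $d(\cdot,\cdot)=\exp(-\langle\cdot,\cdot\rangle/\tau)$ is not tied to any log-likelihood of a specified $p(\mathbf{X}\mid\tilde{\mathbf{X}})$). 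InfoNCE is not a reconstruction surrogate; its canonical role is as a \emph{lower bound on mutual information}, and that is the lever the paper pulls.

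Concretely, the paper's route is: (i) invoke the standard bound $I(x,z)\ge \log N - \mathcal{L}_{\text{InfoNCE}(N)}$, so that with many subgraphs minimizing $\mathcal{L}_W$ \emph{maximizes} $I(\mathbf{X},\tilde{\mathbf{X}})$ --- this is what the ``increasing the number of subgraphs'' hypothesis is for, not an empirical-to-population concentration argument as you assumed; (ii) use the KL regularizer not as one summand in an additive decomposition but to justify the substitution $p(\tilde{\mathbf{X}})\approx q_\phi(\tilde{\mathbf{X}}\mid\mathbf{X},\mathbf{A})$ inside the mutual-information integral; after that substitution one rewrites $I(\mathbf{X},\tilde{\mathbf{X}})$ directly as $-\mathbb{E}_{\mathbf{X}\sim p(\mathbf{X}\mid\tilde{\mathbf{X}})}[\text{KL}(q_\phi\|p(\tilde{\mathbf{X}}\mid\mathbf{X},\mathbf{A}))]$, so maximizing $I$ is exactly minimizing the target. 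Thus in the paper the two loss pieces play asymmetric roles (one gives the MI bound, the other licenses a distributional substitution), whereas you treat them as two additive pieces of the same decomposition. Your framework is cleaner in spirit, but the bridge you need from $\mathcal{L}_W$ to the reconstruction term does not exist as stated; to salvage your route you would have to replace step two by the InfoNCE--MI bound and then relate MI to your reconstruction term, at which point you are essentially back to the paper's argument.
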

\begin{proof}
Firstly, the following theorem from \cite{oord2019representationlearningcontrastivepredictive} outlines the relationship between minimizing the InfoNCE loss and maximizing mutual information between the input $x$ and latent variable $z$, \ie $I(x,z)$.
\begin{proposition}[From \cite{oord2019representationlearningcontrastivepredictive}] 
The equivalence of maximizing the mutual information between the input $x$ and latent variable $z$ and minimizing $\mathcal{L}_{\text{InfoNCE}(N)}(x,z)$ becomes tighter by increasing the number of input data $N$ as:
\begin{equation}
I(x,z)\ge\log(N)-\mathcal{L}_{\text{InfoNCE}(N)}(x,z).
\end{equation}
\end{proposition}

\noindent Next, by minimizing $\text{KL}\left(q_{\phi}(z| x)||p(z)\right)$ leading to $q_{\phi}(z|x)\approx p(z)$, one can write:
\begin{equation}
\begin{split}
&{I(\textit{x},\textit{z})=\int\int p(x,z) \log\left(\frac{p(x,z)}{p(x)p(z)}\right)\,dx\,dz= \int\int \overbrace{p(x,z)}^{p(x|z)p(z)} \log\left(\frac{p(z|x)}{p(z)}\right)\,dx\,dz}\\
&\resizebox{\textwidth}{!}{$\displaystyle{=\int p(x|z)\overbrace{\left[\int q_{\phi}(z|x) \log\left(\frac{p(z|x)}{q_{\phi}(z|x)}\right)\,dz\right]}^{-\text{KL}\left(q_{\phi}(z|x)||p(z|x)\right)}\,dx=-\mathbb{E}_{x\sim p(x|z)}\left[\text{KL}\left(q_{\phi}(z|x)||p(z|x)\right)\right].}$}
\end{split}
\end{equation}
where we have used the mathematical expectation formula $\mathbb{E}_{x\sim p(x)}\left[f(x)\right]=\int f(x)p(x)dx$ for the last equality. Therefore, by increasing the number of inputs, minimizing $\mathcal{L}_{\text{InfoNCE}(N)}(x,z)$ and also KL divergence $\text{KL}\left(q_{\phi}(z|x)||p(z)\right)$, the network implicitly minimizes $\mathbb{E}_{x\sim p(x|z)}\left[\text{KL}\left(q_{\phi}(z|x)||p(z|x)\right)\right]$, which means that the average distance over the samples from $p(x|z)$ between the parametric probability distribution $q_{\phi}(z|x)$ and $p(z|x)$ is minimized. Now, by replacing $\mathcal{L}_{\text{InfoNCE}(N)}(x,z)$, $q_{\phi}(z|x)$, $p(z)$, $p(z|x)$, and $p(x|z)$ with $\mathcal{L}_{W}$, $q_{\phi}(\tilde{\mathbf{X}}|\mathbf{X},\mathbf{A})$, $p(\tilde{\mathbf{X}})$, $p(\tilde{\mathbf{X}}|\mathbf{X},\mathbf{A})$, and $p(\mathbf{X}|\tilde{\mathbf{X}})$, respectively, the proof is completed.
\end{proof}

\method~is driven by two key principles: (i) maximizing the mutual information between the input and latent variables and (ii) designing a robust encoder that generates latent embeddings closely aligned with the true latent distribution.
Theorem \ref{thm:Mu_KL} formally establishes that enforcing the joint minimization of the OT and KL losses in the overall loss (\ref{eq:overall_loss}) leads to the minimization of the expected KL divergence $\mathbb{E}_{\mathbf{X} \sim p(\mathbf{X} | \tilde{\mathbf{X}})}\left[\text{KL}\left(q_{\phi}(\tilde{\mathbf{X}} | \mathbf{X}, \mathbf{A}) \| p(\tilde{\mathbf{X}} | \mathbf{X}, \mathbf{A})\right)\right]
$, ensuring an accurate estimation of the true conditional distribution \( p(\tilde{\mathbf{X}} | \mathbf{X}, \mathbf{A}) \). Simultaneously, this optimization strategy increases the mutual information between the input \( \mathbf{X} \) and the latent embedding \( \tilde{\mathbf{X}} \), thereby reinforcing the encoder’s capacity to preserve essential input characteristics. Theorem \ref{thm:Mu_KL} thus provides the theoretical foundation for \method’s design. Moreover, it highlights a crucial insight: minimizing the KL divergence alone does not necessarily maximize mutual information and may result in suboptimal performance, an observation we empirically validate in Section \ref{sec:validation}.  

\section{Experimental Evaluation}

In this section, we present the empirical assessment of \method~by comparing its performance against current state-of-the-art methodologies across various public datasets.
Additionally, through ablation studies, we verify the efficacy of our method. 
These studies analyze the contribution of individual \method~components to the overall performance. 
Finally, we analyze the computational cost to show our method's scalability to larger graphs.
We also explore the sensitivity of the loss balance hyperparameter \( \beta \) and the size of the subgraph on the model's performance in Appendix \ref{Sensitivity Analysis}.


\textbf{Datasets.} 
We select several widely used datasets for graph node classification to evaluate \method.
These datasets encompass various types of networks, including academic citation networks, collaboration networks, and web page networks, providing diverse challenges and characteristics.
Table \ref{tab:selected_datasets} summarizes the basic statistics of these datasets.

\begin{table}[t]
\centering
\caption{Overview of selected datasets used in the study.}
\vspace{5pt}
\label{tab:selected_datasets}
\setlength{\tabcolsep}{3pt}
\begin{tabular}{lcccccc}
\toprule
\textbf{Dataset}    & \textbf{Nodes} & \textbf{Edges}   &  \textbf{Features} & \textbf{Avg. degree} & \textbf{Classes} \\ 
\midrule
Cora \cite{kipf2016semi}       & 2,708     & 5,429       & 1,433        & 4.0            & 7          \\
Citeseer \cite{citeseer}   & 3,312    & 4,732      & 3,703       & 2.9            & 6          \\
Pubmed \cite{Pubmed}     & 19,717   & 44,338    & 500         & 4.5            & 3          \\
Coauthor \cite{coauther}   & 18,333   & 163,788     & 6,805       & 17.9            & 15         \\
Squirrel \cite{Squirrel}   & 5,201    & 217,073    & 2,089       & 83.5           & 5          \\
Chameleon \cite{Chameleon}  & 2,277    & 36,101     & 2,325       & 31.7           & 5          \\
Cornell \cite{Cornell}    & 183      & 298        & 1,703       & 3.3            & 5          \\
Texas \cite{Cornell}      & 183      & 325        & 1,703       & 3.6            & 5          \\
\bottomrule
\end{tabular}
\end{table}

\textbf{Implementation details.}
We implement \method~using \texttt{PyG} and \texttt{PyTorch}.
Our approach adopts a self-supervised scheme evaluated via linear probing.
The model is trained using the official training subsets of the referenced datasets.
Hyperparameter tuning involves a random search on the validation set to determine optimal values for the hyperparameters.
The best configuration in validation is subsequently employed for tests on the dataset.
We train our model with the Adam optimizer.
We train our models on GPU architectures, including the RTX 3060 and A40.

\textbf{Hyperparameter random search.}
Informed by the findings of \cite{gasteiger2022diffusionimprovesgraphlearning}\cite{topping2022understandingoversquashingbottlenecksgraphs}\cite{giraldo2023trade}, which indicate the sensitivity of GNNs to hyperparameter settings, we undertake random searches for hyperparameter optimization.
The training proceeds on the official splits of the train datasets, with the random search conducted on the validation dataset to pinpoint the best configurations.
These settings are then implemented to evaluate the model on the test dataset.
The ranges of the hyperparameters explored and our code implementation are available\footnote{\url{https://github.com/ShifengXIE/SubGEC/tree/main}} and will be made public after acceptance to facilitate replication and further research.

\subsection{Classification Results}

In our study, we compared our model against five state-of-the-art self-supervised node classification algorithms: POLYGCL \cite{polygcl}, GREET \cite{GREET}, GRACE \cite{GRACE}, GSC \cite{GSC}, and MUSE \cite{MUSE}.
Additionally, we include three classic SSL algorithms for a comprehensive comparison: DGI \cite{DGI}, GCA \cite{GCA}, and GraphMAE \cite{graphmae}. To provide a broader context, we also report the training results from two supervised learning models: GCN \cite{kipf2016semi} and GAT \cite{velivckovic2017graph}. 
\begin{table}[t]
\centering
\caption{Performance comparison of self-supervised and supervised graph representation learning methods across eight benchmark datasets.}
\label{tab:performance}
\vspace{5pt}
\resizebox{\linewidth}{!}{
\begin{tabular}{@{}lSSSSSSSS@{}}
\toprule
\textbf{Method} & {\textbf{Cora}} & {\textbf{Citeseer}} & {\textbf{Pubmed}} & {\textbf{Coauthor}} & {\textbf{Squirrel}} & {\textbf{Chameleon}} & {\textbf{Cornell}} & {\textbf{Texas}} \\
\midrule
\makecell[l]{GCN } & {81.40$_{\pm 0.50}$} & {70.30$_{\pm 0.50}$} & {76.80$_{\pm 0.70}$} & {93.03$_{\pm 0.31}$} & {53.43$_{\pm 1.52}$} & {64.82$_{\pm 2.32}$} & {60.54$_{\pm 3.30}$} & {67.57$_{\pm 4.80}$} \\
\makecell[l]{GAT} & {83.00$_{\pm 0.52}$} & {72.50$_{\pm 0.30}$} & {79.00$_{\pm 0.24}$} & {92.31$_{\pm 0.24}$} & {42.72$_{\pm 3.27}$} & {63.90$_{\pm 2.19}$} & {76.00$_{\pm 3.63}$} & {78.87$_{\pm 3.78}$} \\
\midrule
MUSE      & {69.90$_{\pm 0.41}$} & {66.35$_{\pm 0.40}$} & {79.95$_{\pm 0.59}$} & {90.75$_{\pm 0.39}$} & {40.15$_{\pm 3.04}$} & {55.59$_{\pm 2.21}$} & {83.78$_{\pm 3.42}$} & {83.78$_{\pm 2.79}$} \\
POLYGCL  & \textbf{84.89$_{\pm 0.62}$} & \textbf{76.28$_{\pm 0.85}$} & {81.02$_{\pm 0.27}$} & {93.76$_{\pm 0.08}$} & {55.29$_{\pm 0.72}$} & \textbf{71.62$_{\pm 0.96}$} & {77.86$_{\pm 3.11}$} & {85.24$_{\pm 1.80}$} \\
GREET     & {84.40$_{\pm 0.77}$} & {74.10$_{\pm 0.44}$} & {80.29$_{\pm 0.24}$} & \textbf{94.65$_{\pm 0.18}$} & {39.76$_{\pm 0.75}$} & {60.57$_{\pm 1.03}$} & {78.36$_{\pm 3.77}$} & {78.03$_{\pm 3.94}$} \\
GRACE    & {83.30$_{\pm 0.74}$} & {72.10$_{\pm 0.60}$} & \textbf{86.70$_{\pm 0.16}$} & {92.78$_{\pm 0.04}$} & {52.10$_{\pm 0.94}$} & {52.29$_{\pm 1.49}$} & {60.66$_{\pm 11.32}$} & {75.74$_{\pm 2.95}$} \\
GSC       & {82.80$_{\pm 0.10}$} & {71.00$_{\pm 0.10}$} & {85.60$_{\pm 0.20}$} & {91.88$_{\pm 0.11}$} & {51.32$_{\pm 0.21}$} & {64.02$_{\pm 0.29}$} & {93.56$_{\pm 1.73}$} & {88.64$_{\pm 1.21}$} \\
DGI       & {81.99$_{\pm 0.95}$} & {71.76$_{\pm 0.80}$} & {77.16$_{\pm 0.24}$} & {92.15$_{\pm 0.63}$} & {38.80$_{\pm 0.76}$} & {58.00$_{\pm 0.70}$} & {70.82$_{\pm 7.21}$} & {81.48$_{\pm 2.79}$} \\
GCA       & {78.13$_{\pm 0.85}$} & {67.81$_{\pm 0.75}$} & {80.63$_{\pm 0.31}$} & {93.10$_{\pm 0.20}$} & {47.13$_{\pm 0.61}$} & {56.54$_{\pm 1.07}$} & {53.11$_{\pm 9.34}$} & {81.02$_{\pm 2.30}$} \\
GraphMAE  & {84.20$_{\pm 0.40}$} & {73.40$_{\pm 0.40}$} & {81.10$_{\pm 0.40}$} & {80.63$_{\pm 0.15}$} & {48.26$_{\pm 1.21}$} & {71.05$_{\pm 0.36}$} & {61.93$_{\pm 4.59}$} & {67.80$_{\pm 3.37}$} \\ 
\hdashline
\method & {83.60$_{\pm 0.10}$} & {73.14$_{\pm 0.14}$} & {84.60$_{\pm 0.10}$} & {92.34$_{\pm 0.04}$} & \textbf{56.39$_{\pm 0.57}$} & {69.14$_{\pm 1.12}$} & \textbf{94.57$_{\pm 2.13}$} & \textbf{92.38$_{\pm 0.81}$} \\
\bottomrule
\end{tabular}
}
\end{table}

The results in Table \ref{tab:performance} highlight the strong performance of \method~across diverse datasets. Our model outperforms other state-of-the-art algorithms on three out of eight benchmarks while demonstrating competitive results on the remaining datasets. 
Notably, it achieves the highest accuracy on the strongly heterophilic Squirrel, Cornell, and Texas datasets, exceeding GSC, POLYGCL, and other baselines. This suggests that the proposed design is particularly robust in settings where node connectivity patterns deviate from typical homophilic assumptions. Although POLYGCL slightly surpasses \method~on certain homophilic datasets (\eg Cora and Citeseer), \method~remains comparably strong there.
Overall, these results highlight \method’s robustness in handling heterophilic structures while maintaining strong performance on homophilic graphs, demonstrating its versatility across diverse graph topologies.

\begin{table}[t]
\centering
\caption{Ablation study on KL regularization and other components. \textbf{Reg.} denotes the type of regularization applied, with possible choices including no regularization (\XSolidBrush), KL divergence (\textbf{KL}), and dropout (\textbf{D.}). \textbf{L1} indicates whether the L1 norm was used as a reconstruction loss.
\textbf{De.} represents whether a decoder was included in the model.
\textbf{Cons.} refers to whether contrastive loss was incorporated.}
\vspace{5pt}
\label{tab:ablationKL}
\resizebox{\linewidth}{!}{
\begin{tabular}{@{}cccccccccccc@{}}
\toprule
\textbf{Reg.} & \textbf{L1} & \textbf{De.} & \textbf{Cons.} & \textbf{Cora} & \textbf{Citeseer} & \textbf{Pubmed} & \textbf{Coauthor} & \textbf{Squirrel} & \textbf{Chameleon} & \textbf{Cornell} & \textbf{Texas} \\
\midrule
\XSolidBrush & \XSolidBrush & \XSolidBrush & \CheckmarkBold     & 83.00$_{\pm 0.07}$ & 71.88$_{\pm 0.07}$ & \textbf{85.46$_{\pm 0.04}$} & 91.96$_{\pm 0.09}$ & 42.99$_{\pm 0.17}$ & 64.25$_{\pm 0.21}$ & \textbf{94.58$_{\pm 0.22}$} & 75.72$_{\pm 0.40}$ \\
\textbf{KL} & \CheckmarkBold & \XSolidBrush & \XSolidBrush   & 78.78$_{\pm 1.09}$ & 68.33$_{\pm 1.00}$ & 75.56$_{\pm 1.65}$ & 88.86$_{\pm 0.25}$     & 30.52$_{\pm 0.48}$ & 48.12$_{\pm 0.63}$ & 68.43$_{\pm 0.55}$ & 73.83$_{\pm 1.20}$ \\ 
\textbf{KL} & \XSolidBrush & \CheckmarkBold & \CheckmarkBold  & 82.80$_{\pm 0.07}$ & 73.00$_{\pm 0.08}$ & 80.26$_{\pm 0.37}$ & 92.03$_{\pm 0.13}$ & 35.47$_{\pm 0.21}$ & 60.30$_{\pm 0.21}$ & 93.56$_{\pm 0.64}$ & 88.98$_{\pm 0.52}$ \\ 
\textbf{KL} & \CheckmarkBold & \XSolidBrush & \CheckmarkBold   & 81.60$_{\pm 0.99}$ & 69.60$_{\pm 0.10}$ & 67.54$_{\pm 0.35}$ & 87.03$_{\pm 0.65}$     & 30.52$_{\pm 0.48}$ & 43.26$_{\pm 0.66}$ & 53.29$_{\pm 0.13}$ & 63.45$_{\pm 0.48}$ \\ 
\textbf{D.} & \XSolidBrush & \XSolidBrush & \CheckmarkBold   & 79.00$_{\pm 0.21}$ & 70.60$_{\pm 3.52}$ & 80.84$_{\pm 0.05}$ & 91.52$_{\pm 0.37}$     & 44.24$_{\pm 0.50}$ & 58.94$_{\pm 0.72}$ & 85.76$_{\pm 0.24}$ & 87.98$_{\pm 0.15}$ \\ 
\hdashline
\textbf{KL} & \XSolidBrush & \XSolidBrush & \CheckmarkBold & \textbf{83.60$_{\pm 0.10}$} & \textbf{73.14$_{\pm 0.14}$} & {84.60$_{\pm 0.10}$} & \textbf{92.34$_{\pm 0.04}$} & \textbf{56.39$_{\pm 0.57}$} & \textbf{69.14$_{\pm 1.12}$} & 94.57$_{\pm 2.13}$ & \textbf{92.38$_{\pm 0.81}$} \\
\bottomrule
\end{tabular}
}
\end{table}
\subsection{Ablation Studies}
\label{sec:validation}

\textbf{KL divergence and contrastive loss.}
The first ablation study is concerned with analyzing some elements of \method~such as the architectural choices, the KL regularization, and the contrastive loss.
The outcomes of this ablation study are presented in Table \ref{tab:ablationKL}.


The first row in Table \ref{tab:ablationKL} analyzes the case where we drop the KL loss from \method.
We observe that in overall the performance decreases, demonstrating the importance of the KL loss as theoretically proved in Section \ref{sec:theoretical_analysis}.
On the contrary, the second row in Table \ref{tab:ablationKL} includes only the KL loss and L1 reconstruction loss without including the contrastive loss.
This effectively models a Variational Autoencoder (VAE) type method, where we observe a loss in performance.
This result also aligns with the theoretical findings in Theorem \ref{thm:Mu_KL}, where we show that solely relying on the minimization of the KL loss does not guarantee the accurate estimation of the encoder distribution and can lead to performance degradation compared to using both the KL and contrastive loss functions.


The third model in Table \ref{tab:ablationKL} incorporates a decoder into \method, \ie we use a VAE-type architecture to generate contrastive pairs.
The decoder consists of two fully connected multi-layer perceptrons.
This model achieves competitive results only on specific databases, illustrating that \method~is not merely a combination of a VAE generative model and contrastive learning training methodologies.
The fourth model includes a norm-1 reconstruction loss, calculated as the norm of the difference between input and output features, adding a constraint to enforce similarity between input and output features.
The results indicate that enforcing such similarity is not reasonable.
Finally, the fifth model in Table \ref{tab:ablationKL} replaces the KL divergence with the commonly used regularization technique, dropout.
The results show that our method outperforms dropout.



\textbf{Contrastive loss.}
The second ablation study examines the impact of the distance metric used in our contrastive loss, specifically comparing OT distances with alternative approaches. 
Table \ref{tab:ablationOT} presents the results of this study, evaluating models with Wasserstein-only, Gromov-Wasserstein-only, and L1-only metrics, as well as \method.
Our findings indicate that excluding OT distances leads to suboptimal performance, particularly on heterophilic datasets.
Additionally, we observe that the Gromov-Wasserstein distance slightly outperforms the Wasserstein distance on homophilic datasets.
Most importantly, incorporating both Wasserstein and Gromov-Wasserstein distances in the contrastive loss consistently yields the best performance across all datasets.




\begin{table}[t]
\centering
\caption{Ablation studies on the choice of the distance metric in the contrastive loss. \textbf{W} indicates the use of the Wasserstein distance. \textbf{GW} indicates the use of the Gromov-Wasserstein distance. \textbf{L1} indicates the use of a simple L1 distance.
}
\label{tab:ablationOT}
\vspace{5pt}
\resizebox{\linewidth}{!}{
\begin{tabular}{@{}ccccccccccc@{}}
\toprule
\textbf{W} &\textbf{GW} &\textbf{L1} & \textbf{Cora} & \textbf{Citeseer} & \textbf{Pubmed} & \textbf{Coauthor} & \textbf{Squirrel} & \textbf{Chameleon} & \textbf{Cornell} & \textbf{Texas} \\
\midrule
\CheckmarkBold & \XSolidBrush & \XSolidBrush   & 77.00$_{\pm 0.81}$ & 66.80$_{\pm 1.39}$ & 78.24$_{\pm 1.22}$ & 88.65$_{\pm 0.62}$     & 49.02$_{\pm 0.85}$ & 62.50$_{\pm 0.49}$ & 91.29$_{\pm 0.10}$ & 87.67$_{\pm 0.09
}$ \\
\XSolidBrush & \CheckmarkBold & \XSolidBrush   & 76.20$_{\pm 1.56}$ & 68.98$_{\pm 0.23}$ & 80.20$_{\pm 1.42}$ & 91.08$_{\pm 0.28}$     & 45.16$_{\pm 0.55}$ & 56.17$_{\pm 0.28}$ & 90.16$_{\pm 1.52}$ & 88.47$_{\pm 0.89}$ \\
\XSolidBrush & \XSolidBrush & \CheckmarkBold  & 79.84$_{\pm 0.68}$ & 69.80$_{\pm 0.64}$ & 79.20$_{\pm 2.54}$ & 82.23$_{\pm 1.51}$     & 47.10$_{\pm 0.58}$ & 58.35$_{\pm 0.92}$ & 90.33$_{\pm 1.08}$ & 84.59$_{\pm 0.80}$ \\\hdashline
\CheckmarkBold  &\CheckmarkBold  & \XSolidBrush & \textbf{83.60$_{\pm 0.10}$} & \textbf{73.14$_{\pm 0.14}$} & {\textbf{84.60}$_{\pm 0.10}$} & \textbf{92.34$_{\pm 0.04}$} & \textbf{56.39$_{\pm 0.57}$} & \textbf{69.14$_{\pm 1.12}$} & \textbf{94.57}$_{\pm 2.13}$ & \textbf{92.38$_{\pm 0.81}$} \\
\bottomrule
\end{tabular}
}
\end{table}

\begin{figure}[!t]
    \centering
    \includegraphics[width=0.8\linewidth]{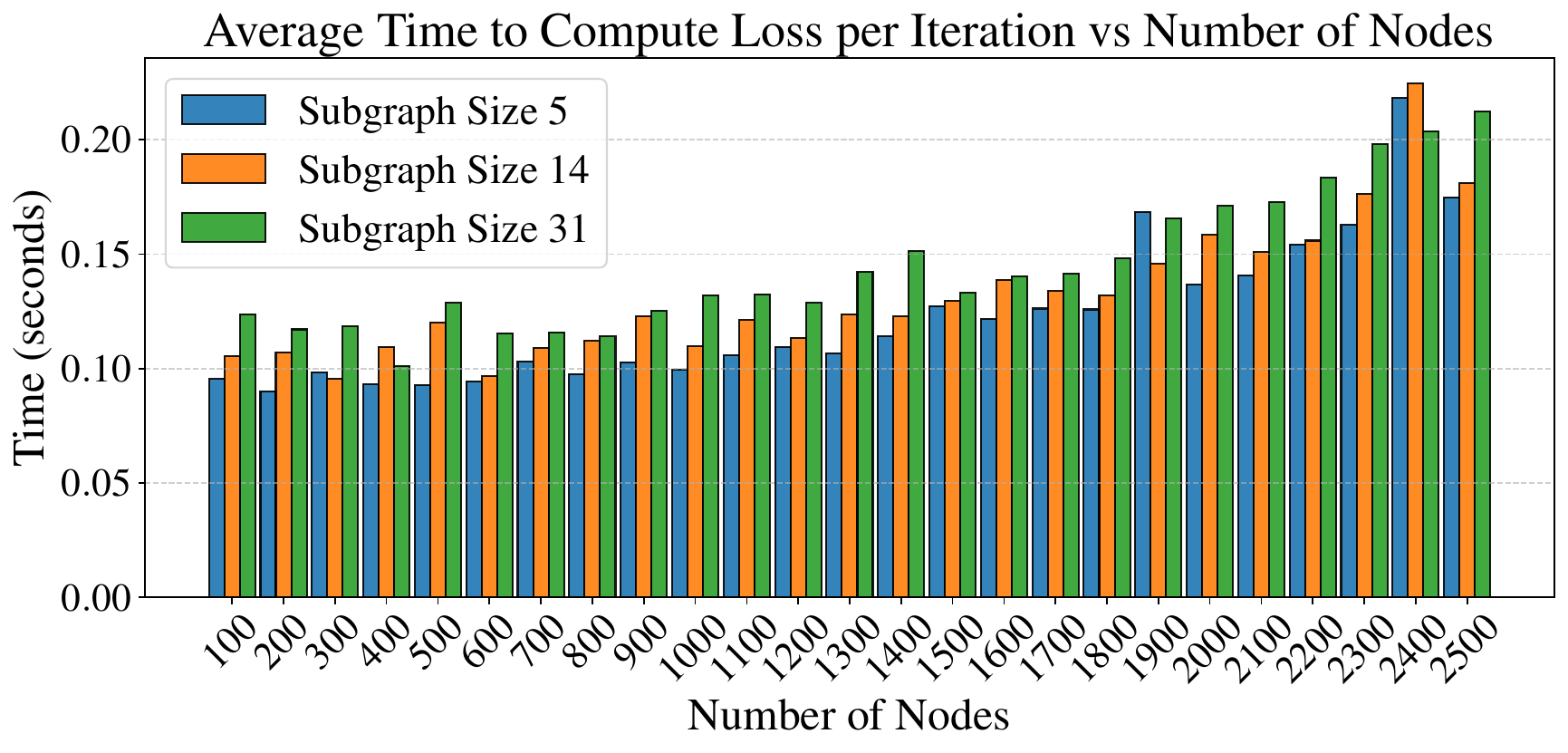}
    \caption{Average time to compute loss per iteration as a function of the number of nodes. The figure compares the computation times for three different subgraph sizes ($5$, $14$, and $31$).}
    \label{fig:Time}
\end{figure}

\subsection{Running Time}

We employ a subgraph sampling strategy to avoid the high computational complexity of OT computations.
Figure \ref{fig:Time} shows the average time to compute the loss per iteration.
The running time can vary due to server performance fluctuations, leading to non-monotonic timing variations.
We observe that the running time remains low, increasing modestly as the graph size grows from 100 to 2,500 nodes, with times ranging from 0.1 to 0.2 seconds.
We attribute this increase in computational time to the higher dimensionalities of the adjacency matrices when subgraphs are sampled.
Overall, \method~keeps a low running time even for increasing graph sizes, potentially enabling applications in large-scale graph SSL tasks.





\section{Conclusion}

This paper introduces the \method, a novel GRL framework that leverages subgraph Gaussian embeddings for self-supervised contrastive learning. 
Our approach maps subgraphs into a Gaussian space, ensuring a controlled distribution while preserving essential subgraph characteristics.
We also incorporate the OT Wasserstein and Gromov-Wasserstein distances into our contrastive loss.
From a theoretical perspective, we demonstrated that our method minimizes the KL divergence between the learned encoder distribution and the Gaussian distribution while maximizing mutual information between input and latent variables.
Our experiments on multiple benchmark datasets validate these theoretical insights and show that \method~outperforms or presents competitive performance against previous state-of-the-art models.
Our findings emphasize the importance of controlling the distribution of contrastive pairs in SSL.

\section*{Acknowledgment}
This research was supported by DATAIA Convergence Institute as part of the «Programme d’Investissement d’Avenir», (ANR-17-CONV-0003) operated by the center Hi! PARIS. This work was also supported by the ANR French National Research Agency under the JCJC projects DeSNAP (ANR-24CE23-1895-01).

\appendix

\section{Graph Convolutional Network}
\label{Encoder}
The graph encoder uses two graph convolution layers, which are mathematically represented as follows:
\begin{equation}
\mathbf{H}_{1} = \sigma\left((\mathbf{D}^{-\frac{1}{2}} \left(\mathbf{A}+ \mathbf{I}\right) \mathbf{D}^{-\frac{1}{2}}  \mathbf{X} \Theta_1\right), \quad
\mathbf{H}_2 = \sigma\left(\mathbf{D}^{-\frac{1}{2}} \left(\mathbf{A}+ \mathbf{I}\right) \mathbf{D}^{-\frac{1}{2}} \mathbf{H}_1 \Theta_2\right).
\end{equation}

\section{Details of the KL Divergence in Subgraph Gaussian Embedding}
\label{KL}

The KL divergence between these two distributions has a well-known closed-form expression. In our setting, we write \cite{VAE}:
\begin{equation}
\label{eq:kl_details}
\text{KL}\bigl(q(\tilde{\mathbf{X}} | \mathbf{X}, \mathbf{A}) \,\big\|\, p(\tilde{\mathbf{X}})\bigr) 
= \frac{1}{2\vert \mathcal{P} \vert} \sum_{i \in \mathcal{P} } \sum_{j=1}^{d} \Bigl( {\mu}_{ij}^2 + {\sigma}_{ij}^2 - 1 - 2 \,\log {\sigma}_{ij} \Bigr),
\end{equation}
where \({\mu}_{ij}\) and \({\sigma}_{ij}\) represent the \(j\)-th components of the latent mean and latent standard deviation for node \(i\). The set \(\mathcal{P}\) indexes the nodes in the induced subgraphs under consideration, and \(d\) is the dimensionality of the latent space.


\section{Sensitivity Analysis}
\label{Sensitivity Analysis}

To investigate the impact of the regularization constraint on our method, experiments were conducted on the Cora dataset. The influence of regularization within the loss function was controlled by varying the hyperparameter \(\beta\), which ranged from \(10^{-6}\) to \(10^{2}\). The results, as illustrated in Figure \ref{fig:sensitivity}, indicate sensitivity to changes in  \(\beta\). Specifically, we observe that values of \(\beta\) greater than or equal to \(10^{-5}\) have a pronounced effect on the model's performance. Optimal results on the Cora dataset are achieved when \(\beta\) was set within \(10^{-3}\).


To evaluate the sensitivity of \method~to the subgraph size hyperparameter, we conducted a sensitivity analysis on the Cora dataset using subgraph sizes \( k = 5, 15, 25, \) and \( 35 \). As shown in Figure~\ref{fig:sensitivity_subgraph_size}, the model exhibits robust performance across a wide range of subgraph sizes, with competitive mean test accuracy and low variability observed for \( k = 5 \) to \( 25 \). 
While \( k = 15 \) achieves marginally higher accuracy, the minimal differences in performance across this range suggest that the model is not overly sensitive to precise subgraph size selections. A gradual decline in performance at \( k = 35 \) highlights the upper bound of robustness, likely due to increased noise from redundant structural information. 

\begin{figure}[t]
    \centering
    \begin{subfigure}[t]{0.49\textwidth}
        \centering
        \includegraphics[width=\textwidth]{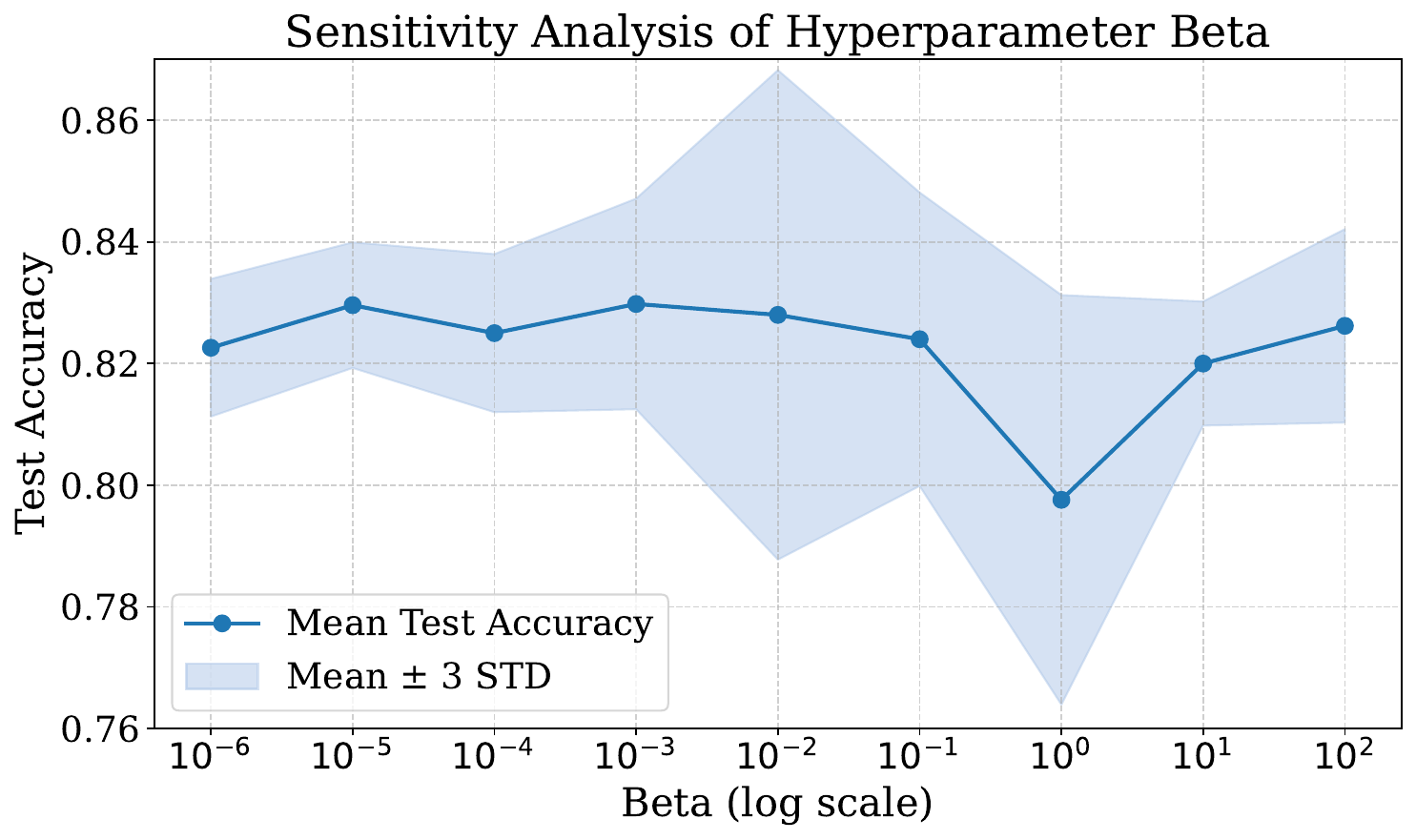}
        \caption{Sensitivity analysis of hyperparameter beta \(\beta\).}
        \label{fig:sensitivity}
    \end{subfigure}
    \hfill
    \begin{subfigure}[t]{0.49\textwidth}
        \centering
        \includegraphics[width=\textwidth]{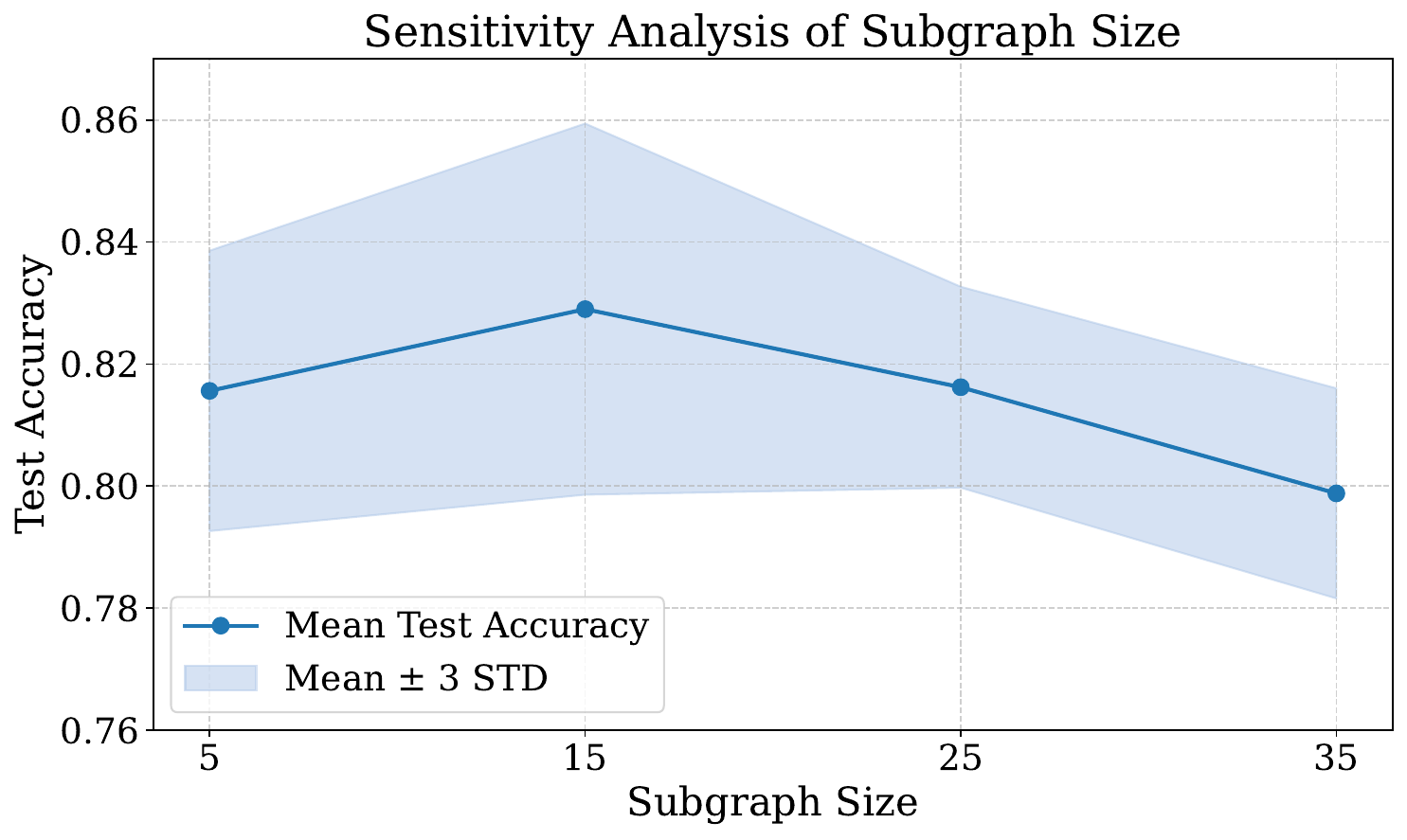}
        \caption{Sensitivity analysis of subgraph size \( k^i \).}
        \label{fig:sensitivity_subgraph_size}
    \end{subfigure}
    \caption{The plot displays the mean test accuracy (solid blue line) along with a shaded confidence region representing the mean \(\pm3\) standard deviations. The analysis illustrates the sensitivity of test accuracy to variations in hyperparameter beta and subgraph sizes.}
\end{figure}

\end{document}